\DeclareMathOperator*{\argmax}{arg\,max}
\DeclareMathOperator*{\argmin}{arg\,min}
\newtheorem{theorem}{Theorem}[section]
\DeclareMathOperator{\EX}{\mathbb{E}}
\title{Emergent Discrete Communication in Semantic Spaces}
\author{%
  Mycal Tucker$^1$, Huao Li$^2$, Siddharth Agrawal$^3$, Dana Hughes$^3$, Katia Sycara$^3$, Michael Lewis$^2$, and Julie Shah$^1$ \\
  $1$ Massachusetts Institute of Technology\\
  $2$ University of Pittsburgh\\
  $3$ Carnegie Mellon University\\
  \texttt{mycal@mit.edu, hul52@upitt.edu, siddhara@cs.cmu.edu} \\
}
\begin{document}

\maketitle

\begin{abstract}
Neural agents trained in reinforcement learning settings can learn to communicate among themselves via discrete tokens, accomplishing as a team what agents would be unable to do alone.
However, the current standard of using one-hot vectors as discrete communication tokens prevents agents from acquiring more desirable aspects of communication such as zero-shot understanding.
Inspired by word embedding techniques from natural language processing, we propose neural agent architectures that enables them to communicate via discrete tokens derived from a learned, continuous space.
We show in a decision theoretic framework that our technique optimizes communication over a wide range of scenarios, whereas one-hot tokens are only optimal under restrictive assumptions.
In self-play experiments, we validate that our trained agents learn to cluster tokens in semantically-meaningful ways, allowing them communicate in noisy environments where other techniques fail.
Lastly, we demonstrate both that agents using our method can effectively respond to novel human communication and that humans can understand unlabeled emergent agent communication, outperforming the use of one-hot communication.
\end{abstract}

\section{Introduction}
A longstanding goal of AI has been to develop agents that can cooperate with other agents or humans to accomplish tasks together.
Often, communication is necessary to enable such cooperation; the study of emergent communication has recently shown great success in producing agents that learn to communicate.
In reinforcement learning settings, guided only by environment reward, neural agents can learn to communicate by broadcasting numerical vectors to each other \citep{foerster2016learning,havrylov2017emergence,mordatch2017emergence,lazaridou2020emergent}.

Given this success, and in part inspired by the discrete nature of words in natural language, some researchers  have focused on emergent discrete communication by forcing agents to broadcast one-hot vectors \citep{foerster2016learning,lazaridou2016multi}.
These tokens in effect become a lexicon used by agents.
Studying when these tokens are emitted allows researchers to uncover their meanings, as well as to study the broader questions of what environment or agent factors contribute to desirable aspects of learned communication (e.g., compositionality or, in continuous communication settings, zero-shot understanding) \citep{kottur2017natural,lazaridou2018emergence,bullard2020exploring,bullard2021quasi}.

We claim that discretizing messages by constraining them to conform to one-hot vectors fundamentally precludes agents from learning some desirable properties of language.
One-hot vectors establish no relationships between tokens because each one-hot vector is orthogonal to and equally far away from all other vectors.
Conversely, research from natural language processing and word embeddings has long established the importance of learning representations of discrete words within a continuous, semantic space \citep{word2vec,glove}.

In this work, we demonstrate the benefit of agents that employ a discrete set of tokens within a continuous space over agents that use the standard practice of communicating via one-hot vectors in discrete emergent communication settings.
We present a novel architecture and implementation for learning such communication and provide decision-theoretic analysis of the value of such an approach - the congruence of meaning and form of communications.
Simulation experiments confirmed these results: our agents learned an arrangement of tokens that clustered in human-understandable patterns.
The arrangement of discrete tokens within the learned communication space produced team performance that was robust to environment noise and enabled agents to effectively utilize novel communication vectors.
In human-agent experiments, agents aligned their tokens with natural language embeddings and responded appropriately to novel English phrases.
Lastly, we showed that humans capably interpreted unlabeled emergent communication tokens in a reference game.\footnote{Anonymized code available at \href{https://anonymous.4open.science/r/NeurIPS-protocomms/README.md}{https://anonymous.4open.science/r/NeurIPS-protocomms}}

\section{Related Work}
\label{related}
We propose a technique within emergent communication literature, drawing inspiration from work on word embeddings in natural language processing (NLP) and zero-shot classification.

\subsection{Emergent Communication}
Researchers of emergent communication study techniques to enable agents to learn to communicate among themselves, enabling high task performance in reinforcement learning settings (see  \cite{sukhbaatar2016learning,hausknecht2016cooperation,lazaridou2020emergent}, among others).
These settings, such as reference games or Lewis signalling games \citep{lewis2008convention}, are designed such that agents must communicate to perform the task successfully; in ``cheap-talk'' scenarios, agents often learn successful communication strategies by sending real-valued vectors to each other.
We focus on discrete communication emerging among decentralized agents that may communicate by sending one of a finite set of vectors to each other, but may not access other agents' weights or gradients during training or execution \citep{NEURIPS2019_fe5e7cb6}.
Previously, such discrete communication has often taken the form of one-hot messages: Foerster et al. \cite{foerster2016learning} proposed binary discrete messages, but subsequent works seem to have reverted to one-hot vectors \citep{kottur2017natural,lazaridou2016multi,havrylov2017emergence,lee2017emergent,chaabouni2019antiefficient}.

Even when agents learn to communicate, they often fail to learn a protocol that humans or separately-trained agents can understand; that is, they fail at the ``zero-shot'' learning problem \citep{hu2020other}.
To address this gap, several recent works have found properties of environments that encourage zero-shot continuous communication \citep{bullard2020exploring,bullard2021quasi}.
Other research addresses human-understandable communication in RL settings.
Using bottom-up approaches, researchers align emergent communication tokens with human language \citep{lee2019countering,lowe2020interaction};  Lazaridou et al. \cite{lazaridou2016multi} even demonstrated that humans can understand agents' tokens in zero-shot settings (e.g., interpreting the agent's token for ``dolphin'' as referring to a photo of water).
Complementary, ``top-down'' approaches leverage pre-trained task-specific language models for natural language communication \citep{lazaridou-etal-2020-multi}.

We position our work within the field of emergent discrete communication: we seek to have agents learn to communicate via a finite set of possible messages, without access to a task-specific language model.
Unlike prior work, we relax the assumption that discrete tokens must take the form of one-hot vectors and demonstrate that our proposed architecture enables agent utilization of novel communication, and supports human understanding of learned communications.





\subsection{Natural Language Word Embeddings}
While early neural NLP research used one-hot representations of words, most modern techniques rely upon dense word embeddings such as Word2Vec or GLoVE for faster training and greater generalization power \citep{bengio2003neural,word2vec,glove,almeida2019word}.
In essence, these embedding techniques learn discrete representations of words within a semantically meaningful space; words representing similar meanings are often embedded in similar locations.
We argue that current research in emergent discrete communication that relies upon one-hot tokens is similar to the early one-hot word embedding techniques that suffered from limited generalizability.
Instead, we propose that discrete emergent communication should be learned within a continuous, semantically meaningful space, like modern word embeddings.

\subsection{Zero-Shot Classification}
Research in zero and few-shot classification seeks to train classifiers that are able to correctly classify inputs belonging to new classes despite seeing few or no training examples of that class \citep{xian2018zero,meng2020unsupervised,chen2018zero}.
Often, techniques employ ``side-information'' to enable high performance, for example using labeled data from a different domain \citep{meng2020unsupervised} or information from language \citep{socher2013zero}.
We focus on emergent communication rather than classification, but in experiments aligning communication tokens with word embeddings, we similarly exploit side information for zero-shot understanding.

\section{Technical Approach}
\subsection{Emergent Communication in Multi Agent Reinforcement Learning}
We adopt standard methods for training multiple agents in a decentralized partially-observable Markov decision process \citep{bernstein2002complexity}.
The Dec-POMDP is defined by the $(S, A, T, R, O, \Omega, \gamma)$ tuple.
$S$ is the set of states; $A_i \forall i \in [1, N]$ are the sets of actions, including communication, for each of $N$ agents; $T: S \times A_1 ... A_N \longrightarrow S$ is the probabilistic transition between states due to joint actions.
We focus on partially-observable settings to encourage communication among agents.
$\Omega$ defines the set of possible observations; $O_i: A_1 ... A_N \times S \longrightarrow \Omega$ maps from joint actions and the state to distributions of observations for each agent.
Lastly, $\gamma$ and $R$ define the discount factor and reward function, respectively.
The goal is to find the policies for all agents, $\pi_i: o \in \Omega \longrightarrow A_i$, that maximize the expected discounted reward.
Multi-agent Dec-POMDPs are well studied \citep{mnih2016asynchronous,lowe2017multi,schulman2017proximal}; we borrow existing techniques for solving these problems in multi-agent reinforcement learning settings, using an agent architecture we develop. 

\subsection{Decision-Theoretic Noisy Discrete Communication}
\label{sec:analysis}
In this section, we formalize a noisy channel reference game, an instance of a Dec-POMDP.
We derive that one-hot vectors are optimal tokens under restrictive assumptions, and relaxation of those assumptions may lead to suboptimality of one-hot communication.

Consider two agents, a speaker and a listener, that communicate via a $c$ dimensional noisy channel,
Each episode is composed of two timesteps.
In the first timestep, a speaker emits a token, $t$, chosen from a distribution over columns in a matrix of $z$ tokens, $T$; $t \in T_{c \times z}$.
In the second timestep, the listener observes a message, $m \in R^{c}$, a noisy version of $t$, corrupted by additive zero-mean noise.
We constrain all tokens to fall within the unit hypercube (i.e., $0 \leq T[i, j] \leq 1; i \in [0, c], j \in [0, z]$), which constrains the relative scales of tokens and noise.

The listener's task is to predict which token the speaker emitted, given the observed noisy message.
We assume there exists some reward matrix, $R_{z \times z}$ that specifies the shared reward among agents, where entry $R[i, j]$ corresponds to the reward if the speaker emits $T[i]$ and the listener predicts $T[j]$; standard reward matrices resemble the identity matrix, potentially with some small, positive values off the diagonal for ``reasonable'' mistakes.
From a decision-theoretic framework, the speaker and listener wish to maximize expected reward.
We seek to find the optimal set of tokens, $T^*$, that maximizes this expected value, as shown below:


\begin{alignat}{2}
    T^* & = \argmax_T \sum_{i \in z} P(T[i])\underset{P(m_i | T[i])}{\EX}  \left[ \max_{j \in z} R[i, j] \frac{P(T[j] | m_i) P(T[j])}{\sum_{k \in z} P(T[k] | m_i) P(T[k])} \right]\\
    &= \argmax_T \sum_{i \in z} \EX_{P(m_i | T[i])} \left[ \frac{P(T[i] | m_i)}{\sum_{k \in z} P(T[k] | m_i)} \right] \hskip0.13\textwidth  \mbox{Uniform prior;}\quad R = I\\
      & = \argmin_T \sum_{i \in z} \EX_{P(m_i | T[i])} \left[ \sum_{j \in z; j\neq i} P(T[j] | m_i) \right] \hskip0.18\textwidth \mbox{}\\
      &= \argmax_T \sum_{i \in z} \sum_{j \in z; j \neq i} || T[i] - T[j] ||^2  \hskip0.33\textwidth \mbox{Gaussian noise}
\end{alignat}


Equation 1 specifies the expected reward using a rational listener model: taking the expectation over tokens (the outer sum), the listener chooses the token $T[j]$ that maximizes the expected reward given the observed message, $m_i$.
Equation 2 follows under assumptions of $z = c$ (an equal number of tokens and communication dimensions), $R = I_{z \times z}$, and a uniform prior over emitted tokens.
Equation 3 follows, deriving that the optimal tokens minimize the likelihood of confusion of any pair of tokens.
Finally, assuming a Gaussian noise model with a variance matrix equal to $\sigma I$ for some constant $\sigma$, Equation 4 states that the optimal tokens maximize the mean squared euclidean distance between tokens.
If tokens are constrained within a unit hypercube, one-hot encodings are an optimal solution.
A proof of optimality of one-hot tokens is included in Appendix~\ref{app:proof}.

While one-hot encodings may be optimal in some scenarios, the above analysis also reveals when they may not be: when the cost of errors or prior over tokens are not uniform, or when the number of tokens and communication dimensions are not equal.
Examples of suboptimality of one-hot tokens when these assumptions are relaxed are included in Appendix~\ref{app:proof}.
Thus, although many academic experiments respect these assumptions, a more powerful framework for learning tokens is needed in the general case.

\subsection{Discrete Prototype Communication}
\label{sec:method}
We present our method for prototype-based communication to address the limitations of one-hot communication by endowing agents with learnable tokens.
We create neural agents with a communication head instantiated as a multi-layer perceptron with a penultimate softmax layer (where we use the gumbel softmax trick \citep{jang2017categorical}), which then is multiplied by a learnable matrix, $T_{z \times c}$, for the final output.
More formally, an agent, parametrized by weights $\Theta$ and $T$, produces a communication token $t$ according to Equation~\ref{eq:agent_weights}.

\begin{equation}
\label{eq:agent_weights}
    \begin{split}
    \pi & = f_{\theta}(O) \\
    d &= \texttt{one\_hot}(\argmax_{i} [g_i + \log \pi_i])\\
    t & = d \times T_{z \times c}\\
    \end{split}
\end{equation}

Equation~\ref{eq:agent_weights} states that agents produce logits, $\pi$, from an observation, $O$, convert the logits to a one-hot vector, $d$, by using sampled $g_i$ drawn i.i.d. from a Gumbel(0, 1) distribution, and then multiply $d$ by $T$ to produce the token.\footnote{In experiments, we constrained the tokens to the unit hypercube by passing $t$ through a sigmoid activation to enable fair comparisons to one-hot encodings, but tokens need not generally obey such constraints.}
Although the argmax operation is not differentiable, we use it in the forward pass (that is, actually using ``hard'' argmax) but use the gumbel softmax operation for calculating gradients, used in the backwards pass.
We refer readers to Jang et al. \cite{jang2017categorical} for more details on the gumbel softmax approximation.

We note that our technique introduces an additional set of learnable parameters compared to one-hot communicative agents, which would merely output $d$.
This increased flexibility may enable improved performance but may also increase learning difficulty; experiment results, reported in Sections~\ref{sec:self} and \ref{sec:human}, indicated net benefits derived from our technique.

\subsection{Opportunities for Behavior Shaping}
\label{sec:shaping}
The previous section illustrates how $T$ may be learned using traditional reinforcement learning techniques; using a policy gradient method, the tokens will evolve during training to enable high task performance.
While this emergent behavior is desirable for its flexibility, we also examined two ways of constraining learned behavior: ``lexicon-setting'' and grounding.

The first constraint, which we call ``lexicon-setting,'' consisted of manually specifying a non-trainable $T$.
During training, gradients passed through $T$ but did not update its elements.
This allowed a designer to specify the set of tokens agents used to communicate, in other words setting the ``lexicon.''

The second constraint consisted of teaching grounding via a small set of supervised data.
Even with hand-specified tokens, it is unclear \textit{a priori} what meanings agents will associate with tokens.
To enable human-agent communication, we require a common understanding, or grounding, of tokens.
Inspired by prior art in learning social conventions in RL settings \citep{lerer2019learning,tucker2020adversarially,lowe2020interaction}, we used a small set of data mapping observations to desired communications.
The gradient of the supervised loss (mean squared error of predicted communication vs. desired communication) was added to the policy loss from the RL environment, leading the agent to use communication that matched the grounding data and enabled high task performance.

\section{Experiment Preliminaries}
\subsection{Baselines}
\label{sec:baselines}
We compared our technique to two emergent communication baselines: continuous and one-hot communication.
The continuous communication method (henceforth, \textit{cont}) allowed agents to output real-valued vectors in the communication space, $R^c$, bounded along each dimension within $(-1, 1)$ by using a \texttt{tanh} activation.
The discrete one-hot baseline (henceforth, \textit{one-hot}) allowed agents to output a one-hot vector within the communication space using the gumbel softmax trick.
By definition, this constrained \textit{one-hot} agents to output one of $c$ tokens.

All techniques were trained using the multi-agent deep deterministic policy gradient (MADDPG) method proposed by Lowe et al. \cite{lowe2017multi}, a common policy-gradient method that updates neural network weights to maximize the expected discounted reward.


\subsection{Hypotheses}
Throughout our experiments, we assessed trained agents by measuring environment-specific reward under different scenarios.
First, we measured self-play reward for agents trained together, as the environment reward they received during 500 evaluation episodes.
Second, we measured aspects of zero-shot behavior and performance in a reference game (introduced in Section~\ref{sec:human}).
Measurements included a listener agent's accuracy in selecting a target image when using novel communication generated by humans, as well as human participant accuracy in identifying a target image based on unlabeled agent communication. 

Using these metrics, we formulated the following hypotheses:
\begin{itemize}
    \item \textbf{H1:} In noisy environments, prototype-based communication will enable agents to achieve higher self-play reward than one-hot tokens.
    \item \textbf{H2:} Prototype-based agents, trained with word embeddings as tokens, will identify target images in reference games based on novel, human-generated communication, whereas one-hot encodings will fail to outperform random chance.
    \item \textbf{H3:} Human participants will accurately identify target images in reference games based on tokens learned by prototype-based agents.
\end{itemize}

Our hypotheses were informed by prototype-based communication design, which we expected to cluster tokens with similar meanings, thereby providing robustness in noisy environments and benefiting human understanding of the communication space.
We expected this improved semantic understanding of the communication space to bolster zero-shot understanding of novel communication.

\begin{figure}
    \centering
    \begin{subfigure}[b]{0.32\textwidth}
        \centering
        \includegraphics[width=\textwidth]{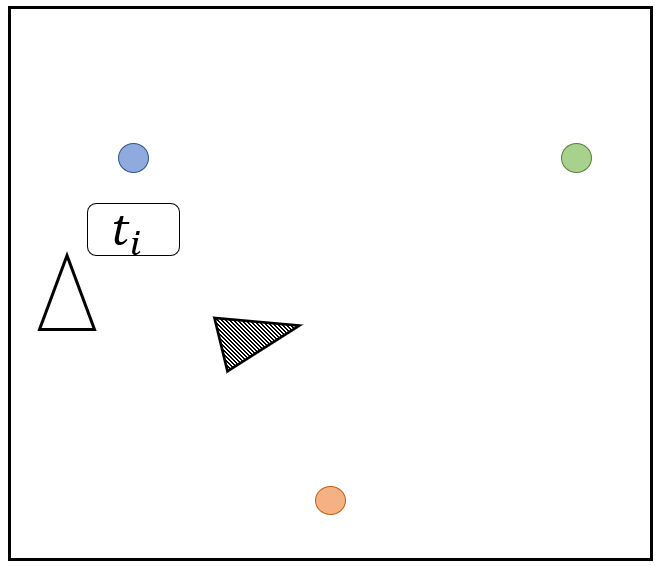}
        \caption{Triangle}
    \end{subfigure}
    \hfill
    \begin{subfigure}[b]{0.32\textwidth}
        \centering
        \includegraphics[width=\textwidth]{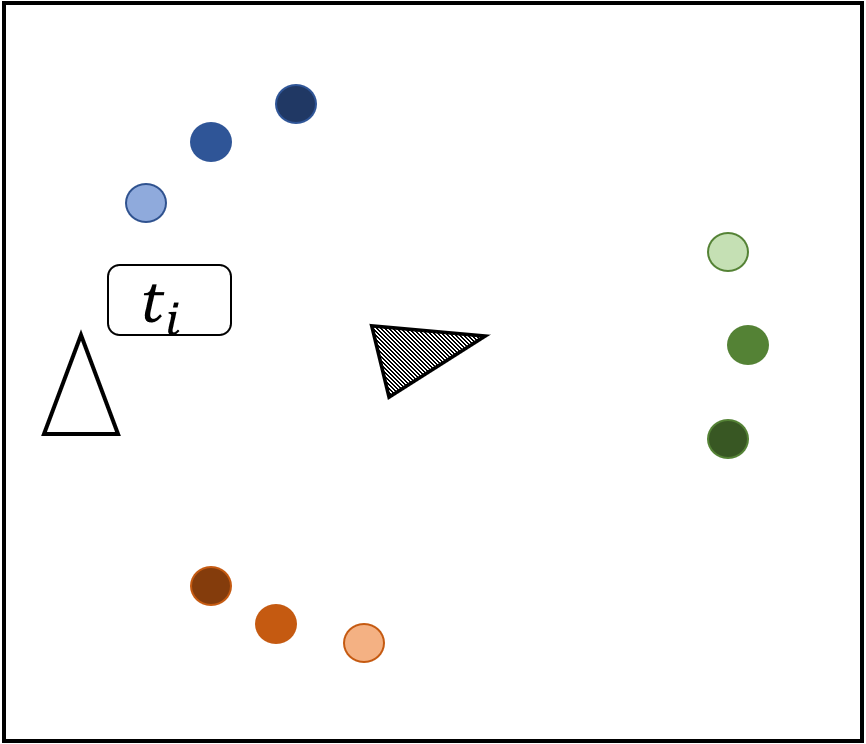}
        \caption{9 Points}
    \end{subfigure}
    \hfill
    \begin{subfigure}[b]{0.32\textwidth}
        \centering
        \includegraphics[width=\textwidth]{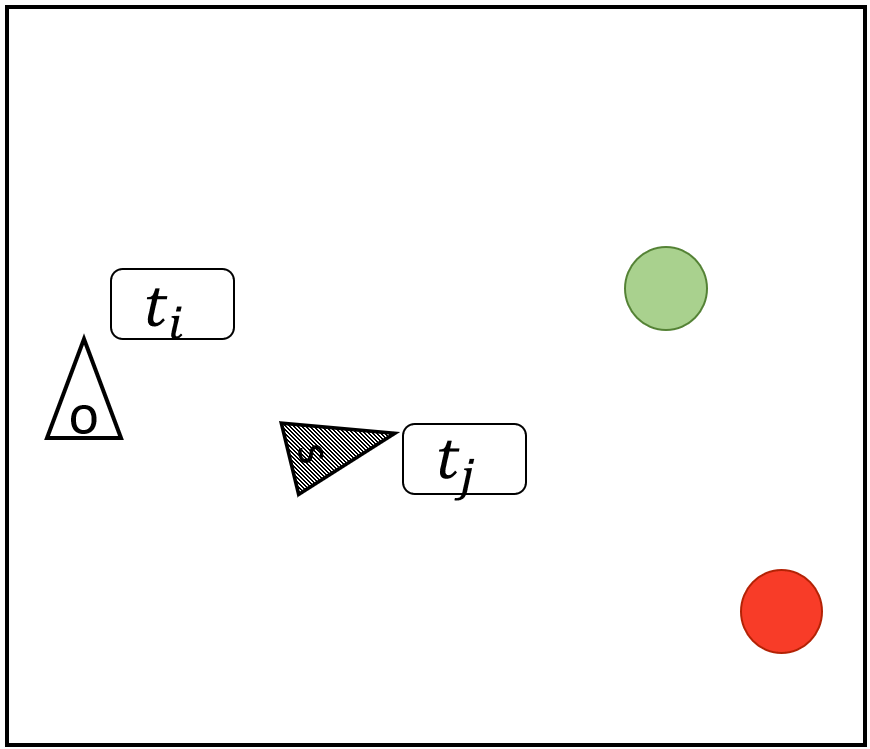}
        \caption{Uniform}
    \end{subfigure}
    \caption{The particle environments used for agent-only testing. In the triangle and 9 Points environments, the landmarks (colored circles) were in fixed locations; a target was chosen among the possible landmarks. In the uniform environment, the landmarks were in locations chosen uniformly at random within the game.}
    \label{fig:particle_worlds}
\end{figure}

\section{Agent Self-Play Experiments}
\label{sec:self}
We first evaluated our technique in particle-world environments, inspired by the environments introduced by Lowe et al. \cite{lowe2017multi} and Mordatch and Abbeel \cite{mordatch2017emergence}.
Agents were trained and tested in fixed teams; thus, these experiments were used to measure robustness to noise and assess characteristics of learned tokens.
Depictions of our environments are shown in Figure~\ref{fig:particle_worlds}.
Full details of the environments are included in Appendix~\ref{app:envs}.

In the first environment, \textit{triangle}, speaker and listener agents were spawned at the origin, and three landmarks were located at the points of an equilateral triangle.
In each episode, a landmark was designated as the target, and noise was drawn from a $c = 9$ dimensional zero-mean Gaussian with $\Sigma = 0.9 I_{c}$.
Only the speaker observed the position of the target; thus, to maximize the reward (negative distance from the listener to the target), the speaker must communicate to the listener, via a 9D vector, and the listener must move to the target.

We investigated the effect of different priors over tokens by changing the distribution over which landmark was chosen as the target: in one setting, targets were drawn uniformly, but in the other, targets were drawn from a skewed categorical distribution of $[0.49, 0.49, 0.02]$.
Three learnable prototypes were defined for our technique ($z = 3$).
Note that this meant that our prototype technique had fewer tokens than the 9 for \textit{one-hot}: we selected $c = 9$ to study scenarios in which $z < c$.

In our second environment, \textit{9 points}, speaker and listener agents were spawned at the origin, with the speaker observing the location of the target and the listener observing its own state.
Reward was similarly defined as the negative distance squared from the listener to the target.
9 landmarks were spawned in fixed locations, depicted in Figure~\ref{fig:particle_worlds}b.
This arrangement of landmarks produced a non-identity reward matrix: the listener confusing two nearby landmarks was less costly than confusing two landmarks that were further apart.
We used the \textit{9 points} environment in 3 settings: two settings with $\sigma = 0$ and $c = 3$ or $c = 9$, and one noisy setting with $\sigma = 0.5$ and $c = 9$.
In \textit{9 points}, our technique used 9 prototypes.
These settings allowed us to study cases when $z \geq c$.
Results from the first two environments, assessing robustness to noise, are shown in Table~\ref{tab:robustness}.

\begin{table}
	\begin{minipage}{0.5\linewidth}
		\caption{Median (standard error) self-play reward over 5 runs. Enabling learnable tokens endowed our agents with greater robustness to environmental noise.}
		\label{tab:robustness}
		\centering
    \begin{tabular}{llllll}
        Env. & Dist. & $\sigma$ & c &\multicolumn{2}{c}{Rewards}\\
        &&&& Proto & One Hot \\
        \hline
        \multirow{2}{*}{Tri.} & Unif. & 0.9 & 9 &  -\textbf{124} (12) & -201 (10)   \\
          & Skew. & 0.9 & 9 & -\textbf{100} (4) & -136 (9)  \\
         \cline{2-6}
        \multirow{3}{*}{9-P.}& Unif. & 0.0 & 3 & -\textbf{60} (7) & -96 (10)  \\
        & Unif & 0.0 &9 &  -65 (2)  & -66 (6) \\
        & Unif & 0.5 &9 &  -\textbf{100} (10) & -132 (2) \\
        \hline
    \end{tabular}
	\end{minipage}\hfill
	\begin{minipage}{0.45\linewidth}
		\centering
		\includegraphics[width=\textwidth]{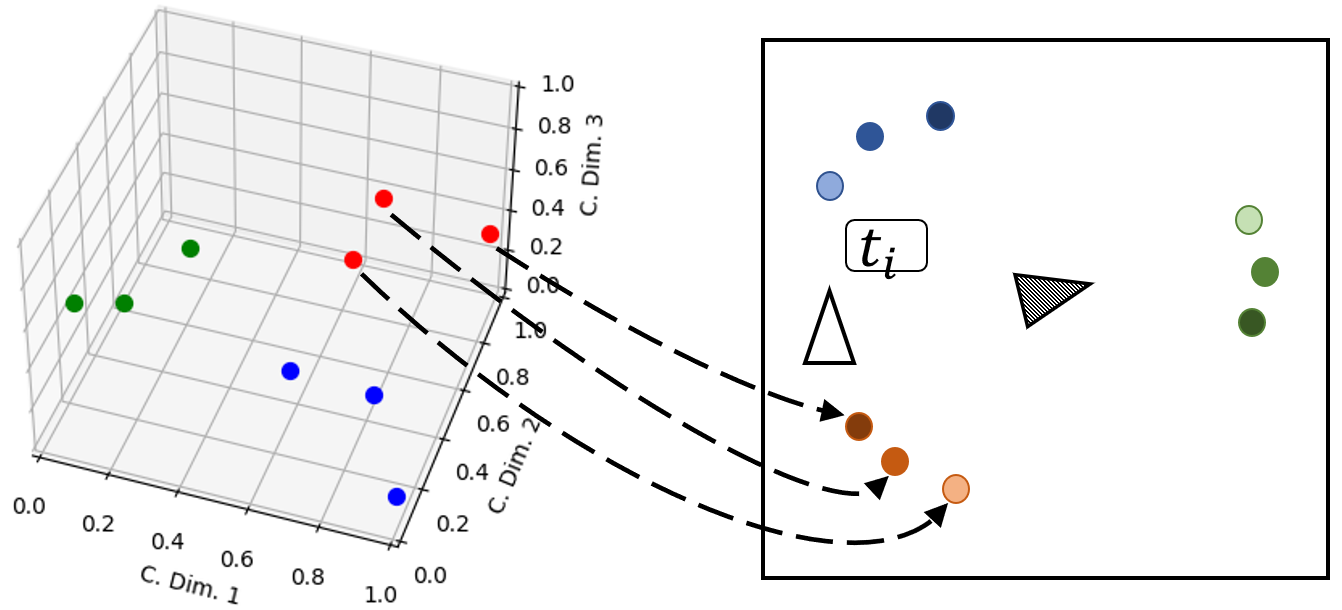}
		\captionof{figure}{The learned prototypes (left) reflect structure in the 9 points environment (right), with similar prototypes encoding similar locations. Prototypes taken from a $c=3; z=9; \sigma=0.3$ experiment conducted for visualization.}
		\label{fig:prototype_locations}
	\end{minipage}
\end{table}

The \textit{triangle} environment results reveal two benefits of our technique over \textit{one-hot}: robustness to noise when $z < c$ and the ability to exploit a non-uniform prior.
First, when there were fewer tokens than communication dimensions, agents used the additional dimensions for robust communication, as shown in the top row of Table~\ref{tab:robustness}.
The three prototypes spread into distant corners of the 9-dimensional unit hypercube, creating a larger distance between tokens than one-hot encodings.
Next, the second row shows that our technique was able to benefit from the non-uniform prior over landmarks by yielding greater robustness for tokens denoting the more likely landmarks.
The one-hot technique, in contrast, learned to always go to the location between the two most likely landmarks.

The \textit{9 points} environment enabled us to study two other attributes of learned communication tokens: the benefits of greater expressivity in a channel-limited domain ($z > c$) and clustering behavior due to a non-identity reward matrix when $z = c$.
First, we confirmed in the third row of Table~\ref{tab:robustness} that, in an environment with $c = 3$ but 9 landmarks, \textit{one-hot} lacked the expressivity to uniquely denote each target; our technique decoupled the number of tokens and the communication dimension, allowing 9 discrete tokens in a 3D communication channel.
The baseline of $z = c = 9$, in the fourth row, confirmed that the 9 prototypes in 3D achieved the same reward as 9 prototypes in 9D or 9 one-hot tokens.
Lastly, in the fifth row, we observed that increasing environment noise led to a graceful degradation of performance for prototype-based agents, but worse failures for \textit{one-hot}.
Inspection of the learned prototypes reveals why: tokens denoting landmarks in the same cluster of 3 landmarks converged into clusters in the communication space, allowing the clusters to separate from each other, as shown in Figure~\ref{fig:prototype_locations}.
Thus, intra-cluster errors became more likely, but inter-cluster error likelihood decreased.
In fact, the mean reward of -100 achieved in the noisy environment is similar to the reward of -96 achieved by the 3D one-hot communication method: both of these techniques devolved to the behavior of treating clusters as single points.

In our final environment, \textit{uniform}, a seeker and observer agent were spawned at random locations in the world; two landmarks (target and distractor) were also spawned at locations drawn from a uniform distribution over the world bounds.
The observer agent observed the location of both landmarks; the seeker agent observed its own position and the id of the target (i.e., the first or second landmark).
Each agent communicated to the other at each timestep.
To achieve high reward, designated as the negative distance from the seeker to the target, both agents had to communicate: the seeker telling the observer which landmark was the target, and the observer responding with the location of the target.
This environment, unlike the prior two, allowed us to assess how agents used a discrete vocabulary to communicate about continuous data.
Prototype and \textit{cont} agents communicated with $c = 3$; \textit{one-hot} used $c = 5$, as prototype agents used $z = 5$.

In the \textit{uniform} environment, we found that our method discretized continuous data in a reasonable way and that the emergent tokens enabled some measure of interpolation between representations.
First, our method and \textit{one-hot} performed similarly to each other and worse than \textit{cont} (Table~\ref{tab:unif_results}).
Given that the observer agent had to communicate about continuous values (the target's location) with a finite, discrete vocabulary, this is unsurprising.
Plotting trajectories of a seeker agent over 200 episodes, the discrete pattern of locations emerges (Figure~\ref{fig:paths}).

\begin{table}
	\begin{minipage}{0.3\linewidth}
		\caption{Median (std. error) reward over 5 runs in \textit{Uniform}. Continuous communication outperformed both discrete methods, which converged to similar discretization behaviors of the continuous space.}
		\label{tab:unif_results}
		\centering
        \begin{tabular}{lr}
            Method & Reward \\
            \hline
            Cont. & \textbf{-97.8} (3.8) \\
            Proto & -113.6 (3.8)\\
            One Hot & -111.8 (3.4)\\
        \end{tabular}
	\end{minipage}
	\hfill
	\begin{minipage}{0.32\linewidth}
		\centering
		\includegraphics[width=\textwidth,trim=25 20 40 20,clip]{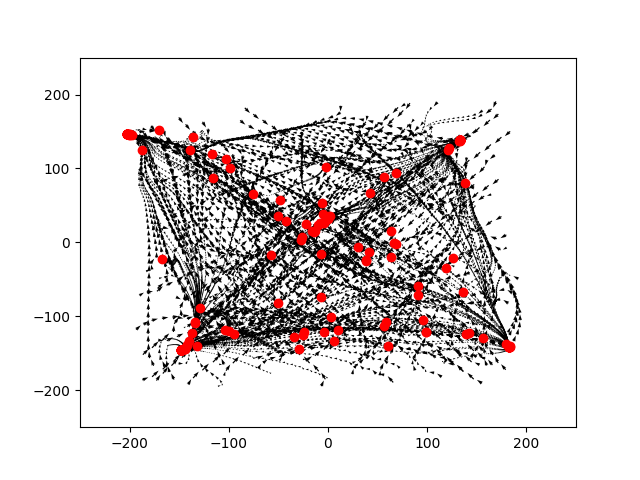}
		\captionof{figure}{Tracing seeker paths (black lines) and final locations (red dots) revealed the discretization imposed by prototypes.}
		\label{fig:paths}
	\end{minipage}
	\hfill
	\begin{minipage}{0.32\linewidth}
		\centering
		\begin{overpic}[width=\textwidth,trim=25 15 35 20,clip]{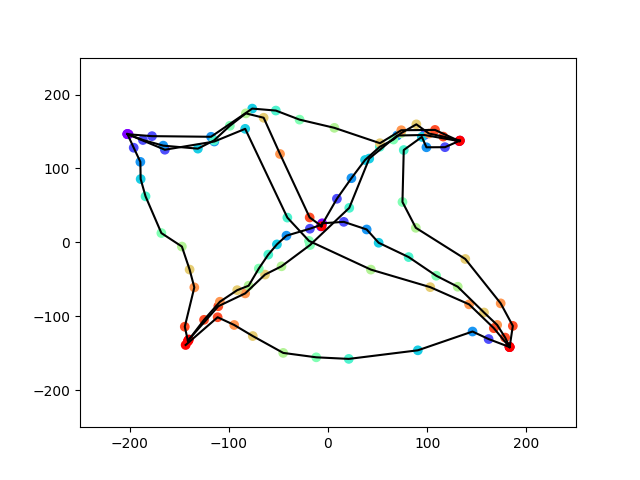}
         \put (10,60) {\large$\displaystyle\alpha$}
         \put (80,55) {\large$\displaystyle\beta$}
         \put (35,65) {\large$\displaystyle0.5\alpha + 0.5\beta$}
         \put (41, 61) {$\bullet$}
        \end{overpic}
		\captionof{figure}{Mean destination of the seeker agent when given pairwise interpolations of prototypes from 0\% (red) to 100\% (purple).}
		\label{fig:interpolation}
	\end{minipage}
\end{table}


Lastly, we tested prototype-based agents with interpolations of learned tokens by overriding the observer agent's communication with pairwise interpolations of all tokens at 10\% intervals.
An example of the resulting behavior is plotted in Figure~\ref{fig:interpolation}.
(We note that interpolation for continuous agents was impossible because of the lack of discrete tokens, and \textit{one-hot} precludes communication other than one-hot vectors.)
Interestingly, we found that prototype-based seeker agents often responded to the interpolation of tokens with interpolation of behavior: for example, when observing the mean of two tokens, $\alpha$ and $\beta$, the seeker agent went to the mean of the locations it would have gone to when observing just $\alpha$ or just $\beta$.
These initial findings motivated work in the next section on zero-shot understanding of natural language.

Together, these three experiments provided support for our first hypothesis, \textbf{H1}.
Our technique enabled agents to learn tokens that provided greater robustness to noise in the \textit{triangle} and \textit{9 points} environments by separating and clustering tokens, confirmed visually in Figure~\ref{fig:prototype_locations}.

\section{Human-Agent Experiments}
\label{sec:human}

Lastly, we tested how well our technique supported aspects of human-agent interaction in zero-shot settings.
Results from the prior section already indicated that prototype-based agents learned a sort of semantic communication space: in \textit{9-points}, for example, similar target locations were encoded via similar prototypes, and in \textit{uniform}, the seeker agent responded to interpolations of prototypes with interpolations of behavior.
In order to enable human-agent interaction in experiments in this section, we aligned parts of human and emergent communication.
We found that this alignment, in conjunction with semantically-meaningful communication spaces, supported two forms of zero-shot understanding.
First, agents were able to understand novel human communication at test time (\textbf{H2}).
Second, humans could understand emergent communication embeddings that agents learned (\textbf{H3}).

We conducted these experiments via a reference game, a standard type of environment for emergent communication \citep{hu2020other,bullard2020exploring}.
In our environment, a speaker observed an image, drawn from the CIFAR10 dataset, and emitted a communication vector \citep{cifar10}.
A listener agent observed the original image and a ``distractor'' image that belonged to a different class, as well as the communication emitted by the speaker.
The listener predicted in the second, final timestep which of the two images was the ``target'' observed by the speaker.
Reward was defined via a matrix, $R_{10 \times 10}$, where $R[i, j]$ set the shared reward if the target image belonged to class $i$ and the listener agent predicted an image of class $j$.
We penalized some errors more than others according to high-level groupings of classes: correct predictions got reward 1, confusion within the set of 6 animal classes or 4 vehicle classes got $0.75$, and predictions that belonged to the wrong group got $0$.
We transformed this basic reference game into two zero-shot experiments.

\subsection{Zero-Shot Agent Understanding}
In our first zero-shot experiment, we assessed agents' ability to understand novel communication by withholding two image classes (one animal and one vehicle) from training.
We trained 5 teams from scratch for each of the 24 possible animal-vehicle excluded combinations, resulting in 120 teams.
All agents were trained in environments with $\sigma = 0.05$; \textit{cont} and prototype agents used $c = 3$, prototype agents used $z = 10$, and \textit{one-hot} used $c = 10$.

To align human and agent communication, we augmented self-play training with 100 supervised examples of image-communication pairs.
These data were interleaved with self-play in training agents, biasing them to match the desired policy, as done in prior social convention literature \citep{lerer2019learning,tucker2020adversarially,lowe2020interaction}.
For \textit{cont} and prototype agents, we used a BERT-like model to create communication by embedding the class label names \citep{reimers-2019-sentence-bert}; \textit{one-hot} used the classification label.
An additional set of prototype agents was trained with hand-designed word embeddings that clustered vehicle and animal embeddings.
See Appendix~\ref{app:nlp} for details on embeddings and image feature extraction.
For the prototype agents, we used the lexicon-setting approach from Section~\ref{sec:shaping} to set $T$ equal to the label embeddings. 

During evaluation, we measured listener prediction accuracy over 500 trials when using different types of communication or images.
In ``in-distribution'' experiments, we evaluated models using the 8 image classes seen during training; in ``out-of-distribution'' experiments, we used only the 2 image classes never seen in training.
Furthermore, we tested three types of communication vectors: those produced by the trained speaker agent (Self), by embedding the class label (Label), or by using embeddings of captions generated by human annotators on Amazon Mechanical Turk (AMT).
Details of the user study for generating AMT descriptions are included in Appendix~\ref{app:amt}; in essence, participants were asked to provide descriptions of images, replicating the role of the speaker agent.

\begin{table}
  \caption{CIFAR10 zero-shot reference game mean reward (std. error). Prototype-based models were the only discrete communication method that enabled better-than-random zero-shot understanding.}
  \label{tab:reference_results}
  \centering
  \begin{tabular}{lllllll}
    \toprule
    Method     &  Label In & Label Out & Self In & Self Out & AMT In & AMT Out \\
    \midrule
    Cont (BERT) & 91\% (0.7) & 55\% (1.0)& 91\% (0.7) & 68\% (1.0)& 73\% (0.4) & 53\% (0.6) \\
    Proto (BERT) & 93\% (0.7) & 58\% (1.0) & 93\% (0.7) &66\% (0.9) & 70\% (0.4) & 56\% (0.6) \\
    Proto (Hand) & 75\% (1.1) & 62\% (1.2) & 80\% (0.6) & 58\% (1.0) & & \\
    One-hot &  96\% (0.2) & 50\% (0.8) & 96\% (0.2)& 69\% (1.0) & &\\
    \bottomrule
  \end{tabular}
\end{table}

Results, shown in Table~\ref{tab:reference_results},  supported hypothesis \textbf{H2}: prototype listener agents supported two types of zero-shot communication.
AMT In experiments reflected one type of zero-shot understanding by forcing listeners to interpret novel communication at test time.
Despite training with a small, finite vocabulary, prototype listeners performed almost on par with \textit{cont} agents.
Conversely, one-hot encoding precludes embedding novel communication.
AMT Out and Label Out results reflected a second type of zero-shot understanding: listeners simultaneously observed novel images and novel communication.
In these settings,  our method outperformed \textit{cont} and the random performance of \textit{one-hot}.
By testing prototype agents trained using BERT or hand-generated embeddings, we found that different embedding methods played an important role in agent performance.
Lastly, in ablation studies, we confirmed that both the reward matrix and environment noise contributed to agents' zero-shot understanding (Appendix~\ref{app:reference}).

\subsection{Zero-Shot Human Understanding of Tokens}
In our final experiment, we studied whether humans could understand agent communication in zero-shot settings, is essence treating humans as listeners in the reference game.
We trained prototype agents in the standard reference game, communicating in 2 dimensions with all 10 classes and no explicit guidance on the tokens.
See Appendix~\ref{app:hyper} for full training details.
In evaluation, we showed participants 8 of the 10 learned tokens, annotated with English class labels.
Finally, we plotted one of the held-out tokens with no label and showed participants two images, one for each of the two held out classes, and asked participants to select the image they thought the new communication most likely referred to.
We recorded participants' selection accuracies among animal-vehicle image pairs for 5 prototype-based models trained from scratch, and a 2D PCA visualization of BERT-based embeddings of labels.

Averaged across the 5 prototype models, with an average of 300 responses per model, participants selected the correct image 70\% of the time (standard error of 5\%).
This compares favorably to the 66\% achieved using embeddings derived from the BERT-based model ($\chi^2(1,N=2093) = 3.21, p = .073$).
While prior art has studied how humans interpret labeled agent tokens in new contexts (e.g., interpreting a one-hot token for ``dolphin'' as referring to a photo of water in Lazaridou et al. \cite{lazaridou2016multi}), we believe this is the first demonstration of human participants understanding novel tokens.
Ultimately, these results provided support for our final hypothesis, \textbf{H3}.


\section{Contributions}
\label{sec:contributions}
In this work, we proposed a technique to address a current gap in existing discrete emergent communication literature.
Prior work in this domain relies upon one-hot tokens for communication, but insights from NLP, as well as decision-theoretic analysis, demonstrate why such an encoding scheme is inadequate in many settings.
Using a technique we developed to enable learnable, discrete tokens, we demonstrated benefits in self-play, including increased robustness to environment noise.
In human-subject experiments, we showed improvements over the random performance of prior art in zero-shot understanding of human communication.
Lastly, we demonstrated that prototype agents learned emergent communication tokens that were human-interepretable and semantically meaningful.
Overall our experiments provided support that our prototype-based communication method produced semantically meaningful communication spaces in noisy environments with non-identity reward matrices, which in turn enabled zero-shot understanding of communication.

Our work provides a first step towards connecting theories of emergent communication and word embeddings.
While we have taken a step in this direction through our proposed technique, future work leveraging insights from discrete representation learning could likely further improve upon our results.
In addition, many of the techniques used to examine word embeddings could be used to analyze emergent communication tokens.
Specifically, we note that prior literature has found reinforced biases in word embeddings; similar care of encoding desirable properties in emergent tokens must be taken.
Lastly, further experiments examining environmental factors that contribute to zero-shot understanding of discrete communication could yield important insight into effective agent training and language evolution.

\begin{ack}
We thank Jacob Andreas for early discussions about embeddings in NLP, and the NeurIPS reviewers for their careful feedback.
This work has been funded by ARL award W911NF-19-2-0146.


\end{ack}


\medskip

\bibliography{sample}

\begin{thebibliography}{10}

\bibitem{almeida2019word}
Felipe Almeida and Geraldo Xex{\'e}o.
\newblock Word embeddings: A survey.
\newblock {\em arXiv preprint arXiv:1901.09069}, 2019.

\bibitem{bengio2003neural}
Yoshua Bengio, R{\'e}jean Ducharme, Pascal Vincent, and Christian Janvin.
\newblock A neural probabilistic language model.
\newblock {\em The journal of machine learning research}, 3:1137--1155, 2003.

\bibitem{bernstein2002complexity}
Daniel~S Bernstein, Robert Givan, Neil Immerman, and Shlomo Zilberstein.
\newblock The complexity of decentralized control of markov decision processes.
\newblock {\em Mathematics of operations research}, 27(4):819--840, 2002.

\bibitem{bullard2021quasi}
Kalesha Bullard, Douwe Kiela, Joelle Pineau, and Jakob Foerster.
\newblock Quasi-equivalence discovery for zero-shot emergent communication.
\newblock {\em arXiv preprint arXiv:2103.08067}, 2021.

\bibitem{bullard2020exploring}
Kalesha Bullard, Franziska Meier, Douwe Kiela, Joelle Pineau, and Jakob
  Foerster.
\newblock Exploring zero-shot emergent communication in embodied multi-agent
  populations.
\newblock {\em arXiv preprint arXiv:2010.15896}, 2020.

\bibitem{chaabouni2019antiefficient}
Rahma Chaabouni, Eugene Kharitonov, Emmanuel Dupoux, and Marco Baroni.
\newblock Anti-efficient encoding in emergent communication.
\newblock In H.~Wallach, H.~Larochelle, A.~Beygelzimer, F.~d\textquotesingle
  Alch\'{e}-Buc, E.~Fox, and R.~Garnett, editors, {\em Advances in Neural
  Information Processing Systems}, volume~32. Curran Associates, Inc., 2019.

\bibitem{chen2018zero}
Long Chen, Hanwang Zhang, Jun Xiao, Wei Liu, and Shih-Fu Chang.
\newblock Zero-shot visual recognition using semantics-preserving adversarial
  embedding networks.
\newblock In {\em Proceedings of the IEEE Conference on Computer Vision and
  Pattern Recognition}, pages 1043--1052, 2018.

\bibitem{NEURIPS2019_fe5e7cb6}
Tom Eccles, Yoram Bachrach, Guy Lever, Angeliki Lazaridou, and Thore Graepel.
\newblock Biases for emergent communication in multi-agent reinforcement
  learning.
\newblock In H.~Wallach, H.~Larochelle, A.~Beygelzimer, F.~d\textquotesingle
  Alch\'{e}-Buc, E.~Fox, and R.~Garnett, editors, {\em Advances in Neural
  Information Processing Systems}, volume~32. Curran Associates, Inc., 2019.

\bibitem{foerster2016learning}
Jakob Foerster, Ioannis~Alexandros Assael, Nando de~Freitas, and Shimon
  Whiteson.
\newblock Learning to communicate with deep multi-agent reinforcement learning.
\newblock In D.~Lee, M.~Sugiyama, U.~Luxburg, I.~Guyon, and R.~Garnett,
  editors, {\em Advances in Neural Information Processing Systems}, volume~29.
  Curran Associates, Inc., 2016.

\bibitem{hausknecht2016cooperation}
Matthew~John Hausknecht.
\newblock {\em Cooperation and communication in multiagent deep reinforcement
  learning}.
\newblock PhD thesis, 2016.

\bibitem{havrylov2017emergence}
Serhii Havrylov and Ivan Titov.
\newblock Emergence of language with multi-agent games: Learning to communicate
  with sequences of symbols.
\newblock In I.~Guyon, U.~V. Luxburg, S.~Bengio, H.~Wallach, R.~Fergus,
  S.~Vishwanathan, and R.~Garnett, editors, {\em Advances in Neural Information
  Processing Systems}, volume~30. Curran Associates, Inc., 2017.

\bibitem{hu2020other}
Hengyuan Hu, Adam Lerer, Alex Peysakhovich, and Jakob Foerster.
\newblock ``{O}ther-play” for zero-shot coordination.
\newblock In {\em International Conference on Machine Learning}, pages
  4399--4410. PMLR, 2020.

\bibitem{shariqbal}
Shariq Iqbal.
\newblock Maddpg-pytorch.
\newblock \url{https://github.com/shariqiqbal2810/maddpg-pytorch}, 2018.

\bibitem{jang2017categorical}
Eric Jang, Shixiang Gu, and Ben Poole.
\newblock Categorical reparameterization with gumbel-softmax, 2017.

\bibitem{kottur2017natural}
Satwik Kottur, Jos{\'e} Moura, Stefan Lee, and Dhruv Batra.
\newblock Natural language does not emerge {`}naturally{'} in multi-agent
  dialog.
\newblock In {\em Proceedings of the 2017 Conference on Empirical Methods in
  Natural Language Processing}, pages 2962--2967, Copenhagen, Denmark,
  September 2017. Association for Computational Linguistics.

\bibitem{cifar10}
Alex Krizhevsky, Vinod Nair, and Geoffrey Hinton.
\newblock Cifar-10 (canadian institute for advanced research).
\newblock 2010.

\bibitem{kuhn1951proceedings}
H.~W. Kuhn and A.~W. Tucker.
\newblock Nonlinear programming.
\newblock In {\em Proceedings of the {S}econd {B}erkeley {S}ymposium on
  {M}athematical {S}tatistics and {P}robability, 1950}, pages 481--492,
  Berkeley and Los Angeles, 1951. University of California Press.

\bibitem{lazaridou2020emergent}
Angeliki Lazaridou and Marco Baroni.
\newblock Emergent multi-agent communication in the deep learning era.
\newblock {\em arXiv preprint arXiv:2006.02419}, 2020.

\bibitem{lazaridou2018emergence}
Angeliki Lazaridou, Karl~Moritz Hermann, Karl Tuyls, and Stephen Clark.
\newblock Emergence of linguistic communication from referential games with
  symbolic and pixel input.
\newblock In {\em 6th International Conference on Learning Representations,
  {ICLR} 2018, Vancouver, BC, Canada, April 30 - May 3, 2018, Conference Track
  Proceedings}. OpenReview.net, 2018.

\bibitem{lazaridou2016multi}
Angeliki Lazaridou, Alexander Peysakhovich, and Marco Baroni.
\newblock Multi-agent cooperation and the emergence of (natural) language.
\newblock In {\em 5th International Conference on Learning Representations,
  {ICLR} 2017, Toulon, France, April 24-26, 2017, Conference Track
  Proceedings}. OpenReview.net, 2017.

\bibitem{lazaridou-etal-2020-multi}
Angeliki Lazaridou, Anna Potapenko, and Olivier Tieleman.
\newblock Multi-agent communication meets natural language: Synergies between
  functional and structural language learning.
\newblock In {\em Proceedings of the 58th Annual Meeting of the Association for
  Computational Linguistics}, pages 7663--7674, Online, July 2020. Association
  for Computational Linguistics.

\bibitem{lee2019countering}
Jason Lee, Kyunghyun Cho, and Douwe Kiela.
\newblock Countering language drift via visual grounding.
\newblock In {\em EMNLP-IJCNLP 2019 - 2019 Conference on Empirical Methods in
  Natural Language Processing and 9th International Joint Conference on Natural
  Language Processing, Proceedings of the Conference}, EMNLP-IJCNLP 2019 - 2019
  Conference on Empirical Methods in Natural Language Processing and 9th
  International Joint Conference on Natural Language Processing, Proceedings of
  the Conference, pages 4385--4395. Association for Computational Linguistics,
  2020.

\bibitem{lee2017emergent}
Jason Lee, Kyunghyun Cho, Jason Weston, and Douwe Kiela.
\newblock Emergent translation in multi-agent communication.
\newblock January 2018.
\newblock 6th International Conference on Learning Representations, ICLR 2018 ;
  Conference date: 30-04-2018 Through 03-05-2018.

\bibitem{lerer2019learning}
Adam Lerer and Alexander Peysakhovich.
\newblock Learning existing social conventions via observationally augmented
  self-play. aaai.
\newblock In {\em ACM conference on Artificial Intelligence, Ethics, and
  Society}, 2019.

\bibitem{lewis2008convention}
David Lewis.
\newblock {\em Convention: A philosophical study}.
\newblock John Wiley \& Sons, 2008.

\bibitem{lowe2020interaction}
Ryan Lowe, Abhinav Gupta, Jakob~N. Foerster, Douwe Kiela, and Joelle Pineau.
\newblock On the interaction between supervision and self-play in emergent
  communication.
\newblock In {\em ICLR}. OpenReview.net, 2020.

\bibitem{lowe2017multi}
Ryan Lowe, Yi~Wu, Aviv Tamar, Jean Harb, Pieter Abbeel, and Igor Mordatch.
\newblock Multi-agent actor-critic for mixed cooperative-competitive
  environments.
\newblock {\em Neural Information Processing Systems (NIPS)}, 2017.

\bibitem{meng2020unsupervised}
Qingjie Meng, Daniel Rueckert, and Bernhard Kainz.
\newblock Unsupervised cross-domain image classification by distance metric
  guided feature alignment.
\newblock In {\em Medical Ultrasound, and Preterm, Perinatal and Paediatric
  Image Analysis}, pages 146--157. Springer, 2020.

\bibitem{word2vec}
Tomas Mikolov, Ilya Sutskever, Kai Chen, Greg~S Corrado, and Jeff Dean.
\newblock Distributed representations of words and phrases and their
  compositionality.
\newblock In C.~J.~C. Burges, L.~Bottou, M.~Welling, Z.~Ghahramani, and K.~Q.
  Weinberger, editors, {\em Advances in Neural Information Processing Systems},
  volume~26. Curran Associates, Inc., 2013.

\bibitem{mnih2016asynchronous}
Volodymyr Mnih, Adria~Puigdomenech Badia, Mehdi Mirza, Alex Graves, Timothy
  Lillicrap, Tim Harley, David Silver, and Koray Kavukcuoglu.
\newblock Asynchronous methods for deep reinforcement learning.
\newblock In {\em International conference on machine learning}, pages
  1928--1937. PMLR, 2016.

\bibitem{mordatch2017emergence}
Igor Mordatch and Pieter Abbeel.
\newblock Emergence of grounded compositional language in multi-agent
  populations.
\newblock {\em Proceedings of the AAAI Conference on Artificial Intelligence},
  32(1), Apr. 2018.

\bibitem{glove}
Jeffrey Pennington, Richard Socher, and Christopher~D Manning.
\newblock Glove: Global vectors for word representation.
\newblock In {\em Proceedings of the 2014 conference on empirical methods in
  natural language processing (EMNLP)}, pages 1532--1543, 2014.

\bibitem{reimers-2019-sentence-bert}
Nils Reimers and Iryna Gurevych.
\newblock Sentence-bert: Sentence embeddings using siamese bert-networks.
\newblock In {\em Proceedings of the 2019 Conference on Empirical Methods in
  Natural Language Processing}. Association for Computational Linguistics, 11
  2019.

\bibitem{schulman2017proximal}
John Schulman, Filip Wolski, Prafulla Dhariwal, Alec Radford, and Oleg Klimov.
\newblock Proximal policy optimization algorithms.
\newblock {\em CoRR}, abs/1707.06347, 2017.

\bibitem{socher2013zero}
Richard Socher, Milind Ganjoo, Christopher~D. Manning, and Andrew~Y. Ng.
\newblock Zero-shot learning through cross-modal transfer.
\newblock In {\em Proceedings of the 26th International Conference on Neural
  Information Processing Systems - Volume 1}, NIPS'13, page 935–943, Red
  Hook, NY, USA, 2013. Curran Associates Inc.

\bibitem{sukhbaatar2016learning}
Sainbayar Sukhbaatar, arthur szlam, and Rob Fergus.
\newblock Learning multiagent communication with backpropagation.
\newblock In D.~Lee, M.~Sugiyama, U.~Luxburg, I.~Guyon, and R.~Garnett,
  editors, {\em Advances in Neural Information Processing Systems}, volume~29.
  Curran Associates, Inc., 2016.

\bibitem{tucker2020adversarially}
Mycal Tucker, Yilun Zhou, and Julie Shah.
\newblock Adversarially guided self-play for adopting social conventions.
\newblock {\em arXiv preprint arXiv:2001.05994}, 2020.

\bibitem{xian2018zero}
Yongqin Xian, Christoph~H Lampert, Bernt Schiele, and Zeynep Akata.
\newblock Zero-shot learning—a comprehensive evaluation of the good, the bad
  and the ugly.
\newblock {\em IEEE transactions on pattern analysis and machine intelligence},
  41(9):2251--2265, 2018.

\bibitem{yu2018deep}
Fisher Yu, Dequan Wang, Evan Shelhamer, and Trevor Darrell.
\newblock Deep layer aggregation.
\newblock In {\em Proceedings of the IEEE conference on computer vision and
  pattern recognition}, pages 2403--2412, 2018.

\end{thebibliography}

\section*{Checklist}


\begin{enumerate}

\item For all authors...
\begin{enumerate}
  \item Do the main claims made in the abstract and introduction accurately reflect the paper's contributions and scope?
    \answerYes{}{}
  \item Did you describe the limitations of your work?
    \answerYes{See Section~\ref{sec:method} on learning, and Section~\ref{sec:contributions} on broader limitations.}
  \item Did you discuss any potential negative societal impacts of your work?
    \answerYes{See Section~\ref{sec:contributions} on how word embeddings and our technique could reinforce biases.}
  \item Have you read the ethics review guidelines and ensured that your paper conforms to them?
    \answerYes{}
\end{enumerate}

\item If you are including theoretical results...
\begin{enumerate}
  \item Did you state the full set of assumptions of all theoretical results?
    \answerYes{In our decision-theoretic analysis, we analyze what assumptions lead to one-hot tokens being optimal communication tokens.}
	\item Did you include complete proofs of all theoretical results?
    \answerYes{See Appendix~\ref{app:proof}}
\end{enumerate}

\item If you ran experiments...
\begin{enumerate}
  \item Did you include the code, data, and instructions needed to reproduce the main experimental results (either in the supplemental material or as a URL)?
    \answerYes{See footnote 1}
  \item Did you specify all the training details (e.g., data splits, hyperparameters, how they were chosen)?
    \answerYes{See Appendices.}
	\item Did you report error bars (e.g., with respect to the random seed after running experiments multiple times)?
    \answerYes{See standard error, reported in all tables.}
	\item Did you include the total amount of compute and the type of resources used (e.g., type of GPUs, internal cluster, or cloud provider)?
    \answerYes{See Appendix~\ref{app:hyper}}
\end{enumerate}

\item If you are using existing assets (e.g., code, data, models) or curating/releasing new assets...
\begin{enumerate}
  \item If your work uses existing assets, did you cite the creators?
    \answerYes{See Section~\ref{sec:baselines}}
  \item Did you mention the license of the assets?
    \answerYes{See Section~\ref{sec:baselines}}
  \item Did you include any new assets either in the supplemental material or as a URL?
    \answerNA{No new assets}
  \item Did you discuss whether and how consent was obtained from people whose data you're using/curating?
    \answerYes{Data were not published; details on user studies are in Appendices.}
  \item Did you discuss whether the data you are using/curating contains personally identifiable information or offensive content?
    \answerYes{Data were not published and were not identifying, as shown in user study details in Appendices.}
\end{enumerate}

\item If you used crowdsourcing or conducted research with human subjects...
\begin{enumerate}
  \item Did you include the full text of instructions given to participants and screenshots, if applicable?
    \answerYes{See Appendix~\ref{app:amt}.}
  \item Did you describe any potential participant risks, with links to Institutional Review Board (IRB) approvals, if applicable?
    \answerYes{See Appendix~\ref{app:amt}.}
  \item Did you include the estimated hourly wage paid to participants and the total amount spent on participant compensation?
    \answerYes{See Appendix~\ref{app:amt}.}
\end{enumerate}

\end{enumerate}


\appendix

\section{Appendix: Optimality of One-Hot Encodings}
\label{app:proof}

We present a brief proof about the local optimality of one-hot encodings in the decision-theoretic framework presented in Section~\ref{sec:analysis}.
We seek to prove that, under assumptions of an identity reward matrix, tokens constrained to a unit hypercube, and gaussian additive noise, one-hot tokens are an optimally robust communication strategy.
We only seek to prove local optimality, as one many trivially generate multiple, equally optimal tokens by, for example, flipping all bits.

The following derivation uses Karush–Kuhn–Tucker (KKT) conditions, a generalization of Lagrange multipliers \citep{kuhn1951proceedings}.

\begin{theorem}[Optimality of One-Hot Encodings]
A matrix $T_{c\times c}$, constrained to the unit hypercube, locally maximizes $\sum_{i \in [1, c]} \sum_{j \in [1, c]; j \neq i} (T[i] - T[j])^\top(T[i] - T[j])$ using one-hot encodings.
\end{theorem}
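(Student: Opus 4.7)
The plan is to exploit additive separability of the objective across coordinates. Expanding the squared norms gives $f(T) := \sum_{i=1}^{c} \sum_{j \neq i} (T[i] - T[j])^\top (T[i] - T[j]) = \sum_{k=1}^{c} h_k(T_{\cdot,k})$, where $h_k(x) := \sum_i \sum_{j \neq i} (x_i - x_j)^2$ depends only on the $k$th column of $T$. Since the feasible set $[0,1]^{c \times c}$ is itself a product of intervals, verifying that $T = I$ is a strict local maximum of $f$ reduces to verifying that the standard basis vector $e_k$ is a strict local maximum of $h_k$ on $[0,1]^c$ for each $k$.

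For each one-column subproblem I would apply the KKT framework the theorem's preamble flags. The identity $\sum_i \sum_j (x_i - x_j)^2 = 2c \sum_i x_i^2 - 2(\sum_i x_i)^2$ gives $h_k(x) = 2c \sum_i x_i^2 - 2(\sum_i x_i)^2$, whose partials at $x = e_k$ evaluate to $\partial h_k / \partial x_k = 4(c-1) \geq 0$ and $\partial h_k / \partial x_i = -4 \leq 0$ for $i \neq k$. These match the KKT sign requirements for a constrained maximum: the nonnegative partial pairs with the active upper bound $x_k = 1$, and the nonpositive partials pair with the active lower bounds $x_i = 0$. Since $e_k$ is a vertex of the hypercube, every feasible direction $d$ satisfies $d_k \leq 0$ and $d_i \geq 0$ for $i \neq k$, so $\langle \nabla h_k(e_k), d\rangle = 4(c-1) d_k - 4\sum_{i \neq k} d_i$ is strictly negative for any nonzero feasible $d$. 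This strict first-order directional certificate already establishes a strict local max, so no Hessian analysis is needed; summing over $k$ yields $\langle \nabla f(I), D\rangle < 0$ for every nonzero feasible perturbation $D \in \mathbb{R}^{c \times c}$.

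The main obstacle is conceptual rather than computational: the ``locally'' qualifier is doing real work. For $c \geq 4$ the function $h_k$ is globally maximized on $[0,1]^c$ by a balanced assignment of $\lfloor c/2 \rfloor$ ones and $\lceil c/2 \rceil$ zeros, not by $e_k$, so the one-hot arrangement is genuinely beaten globally by balanced binary codes. I would state this caveat explicitly so the reader sees that the vertex-based KKT argument certifies only a local extremum, consistent with the main text's subsequent discussion of regimes in which one-hot tokens are suboptimal.
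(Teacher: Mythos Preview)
Your proof is correct, and in fact tighter than the paper's. Both arguments are KKT-based, but they differ in two respects worth noting.

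First, you exploit the separability of the objective and the box constraints across columns, reducing the $c^2$-variable problem to $c$ identical $c$-variable problems. The paper works with the full matrix and writes a single Lagrangian; in doing so it drops the cross-terms $\partial (T_i^\top T_j)/\partial T_i$ by invoking $T_i^\top T_j = 0$, which is not a valid simplification of the derivative (the derivative is $T_j$, not zero). Your column-wise gradient $4c\,x_m - 4\sum_i x_i$ gets the signs right without that slip.

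Second, and more substantively, the paper only checks that one-hot satisfies the KKT stationarity and complementarity conditions and then declares local optimality. That is a necessary-condition argument; for a nonconvex maximization it does not by itself certify a local maximum. Your strict first-order directional test at the vertex --- $\langle \nabla h(e_k), d\rangle < 0$ for every nonzero feasible $d$ --- is exactly the missing sufficiency step, and it goes through because the gradient has strictly positive sign on the active upper-bound coordinate and strictly negative sign on the active lower-bound coordinates.

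Your closing caveat that for $c \geq 4$ the one-column problem is globally maximized by a balanced $\{0,1\}$ vector rather than a basis vector is correct and is not mentioned in the paper; it sharpens the ``locally'' qualifier in the theorem statement.
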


\begin{proof}
We maximize the function, subject to constraints.
We denote the $i^{th}$ row of matrix $T$ as $T_i$, and the $j^{th}$ element of the $i^{th}$ row as $T_{ij}$.

\begin{alignat}{3}
    T^* & = \argmax_T \sum_{i \in [1, c]} \sum_{j \in [1, c]; j \neq i} (T_i - T_j)^\top(T_i - T_j)  \qquad s.t. \quad 0 \leq T_{ij} \leq 1 \forall i, j \in [1, c]\\
    &\mbox{We define KKT terms, using } \mu_i \mbox{ and } \lambda_i \mbox{ for constraints. }\mu_{ij}, \lambda_{ij} \geq 0 \forall i, j \in [1, c] \\
    \mathcal{L} &= \sum_{i \in [1, c]} \sum_{j \in [1, c]; j \neq i} \left[ (T_i - T_j)^\top(T_i - T_j)\right] - \sum_{i \in [1, c]} \left[\vec{\mu_i}(T_i - \vec{1}) - \vec{\lambda_i} T_i \right]&& \\
    \frac{\partial \mathcal{L}}{\partial \vec{\mu_i}} &= -T_i + \vec{1} = \vec{0}\\
    &T_{ij} = 1 \quad \mbox{if} \quad \mu_{ij} \neq 0\\
    \frac{\partial \mathcal{L}}{\partial \vec{\lambda_i}} &= T_i = \vec{0}\\
    &T_{ij} = 0 \quad \mbox{if} \quad \lambda_{ij} \neq 0\\
    \frac{\partial \mathcal{L}}{\partial T_i} &= \sum_{j \in [1, c]; j \neq i} \left[ \frac{\partial ||T_i||^2}{\partial T_i} -  \frac{\partial T_i^\top T_j}{\partial T_i} - \frac{\partial T_j^\top T_i}{\partial T_i} + \frac{\partial ||T_j||^2}{\partial T_i}\right] - \vec{\mu_i} + \vec{\lambda_i} = \vec{0}\\
\end{alignat}

We seek to show that one-hot vectors are an optimum, so we now show that one-hot vectors indeed respect the constraints and set the derivatives to zero. By assuming one-hot vectors, we know that $T_i^\top T_j = 0 \forall i \neq j.$
\begin{alignat}{3}
    \frac{\partial \mathcal{L}}{\partial T_i} = \sum_{j \in [1, c]; j \neq i} \left[ \frac{\partial ||T_i||^2}{\partial T_i} \right]  - \vec{\mu_i} + \vec{\lambda_i} &= \vec{0}\\
    \sum_{j \in [1, c]; j \neq i} \left[2T_i\right] - \vec{\mu_i} + \vec{\lambda_i} &= \vec{0}\\
    2(c - 1)T_i - \vec{\mu_i} + \vec{\lambda_i} &= \vec{0}\\
\end{alignat}

We now consider each element of $T_{ij}$, again assuming that $T_i$ is a one-hot vector.

\begin{alignat}{3}
    \mbox{If}\quad T_{ij} &= 0\\
    &\rightarrow 2(c - 1)T_{ij} = 0& \mbox{Because } T_{ij} = 0\\
    &\rightarrow \mu_{ij} = 0 & \mbox{Because the constraint is not active}\\
    &\rightarrow \lambda_{ij} = 0 & \mbox{Can set } \lambda_{ij} \mbox{ to satisfy equation} \\
    \mbox{If}\quad T_{ij} &= 1\\
    &\rightarrow \lambda_{ij} = 0 & \mbox{Because the constraint is not active}\\
    &\rightarrow 2(c-1)T_{ij} - \mu_{ij} = 0\qquad &\mbox{Solve with } \mu_{ij} = 2(c - 1) \mbox{ to satisfy equation}
\end{alignat}

Thus, we have shown that one-hot vectors in the unit hypercube are a solution to the constrained optimization problem, indicating that it is a local optimum.

\end{proof}

One may also prove that one-hot encodings may not be optimal when the assumptions of an identity reward matrix or a uniform prior over tokens are relaxed.
These proofs follow from counter-examples: we seek merely to show that there exist other tokens that achieve a higher expected reward.
We provide two such examples below for the $z = c = 3$ case, calculating the expected reward for one-hot tokens and alternative tokens we propose, which we show achieve higher mean reward.

\subsection{Uniform prior; non identity reward}
\begin{alignat}{2}
    R &= \begin{pmatrix}
1 & 1 & 0 \\
1 & 1 & 0 \\
0 & 0 & 1 \end{pmatrix}\\
    T_{1hot} &= \begin{pmatrix}
1 & 0 & 0 \\
0 & 1 & 0 \\
0 & 0 & 1 \end{pmatrix}\\
    T_{alt} &= \begin{pmatrix}
1 & 1 & 0 \\
1 & 1 & 0 \\
0 & 0 & 1 \end{pmatrix}
\end{alignat}

In this case, there is a reward of 1 even if tokens 0 and 1 are confused.
Making those two tokens identical (in $T_{alt}$) therefore incurs no penalty and instead decreases the likelihood of confusion with token 2, thus increasing expected reward compared to using $T_{1hot}$.
Running 100,000 simulations of this scenario, with $\sigma=0.5$, shows that one-hot tokens achieve a mean reward of 0.91, whereas $T_{alt}$ achieves a reward of 0.96. 

\subsection{Non-uniform prior; identity reward}
\begin{alignat}{2}
    P &= \begin{pmatrix}
    0.499 & 0.499 & 0.002 \end{pmatrix}\\
    T_{1hot} &= \begin{pmatrix}
1 & 0 & 0 \\
0 & 1 & 0 \\
0 & 0 & 1 \end{pmatrix}\\
    T_{alt} &= \begin{pmatrix}
0 & 0 & 0 \\
1 & 1 & 1 \\
0.5 & 0.5 & 0.5 \end{pmatrix}\\
\end{alignat}

Although $R = I$, because tokens 0 and 1 are far more likely than token 2, setting tokens $T$ to maximally distinguish between the first two tokens becomes more important than forcing token 2 to be far from other tokens.
Simulations with $\sigma = 0.5$ show that one-hot achieves reward of 0.92 but $T_{alt}$ achieves 0.96.

Together, these two examples prove that when $R \neq I$ or there is a non-uniform prior over tokens, one-hot encodings may not be optimal.
\section{Appendix: Particle Environment Details}
\label{app:envs}
The \textit{triangle} environment spawned the speaker and listener agents at the origin.
Landmarks were distributed at distance 200 from the origin, every $\frac{\pi}{3}$ radians, starting at 0.
The listener particle had a mass of 0.1; its force actions were bounded via a \texttt{tanh} operation; force was integrated to change velocity and position via a double-integrator physics engine with a damping coefficient of 0.9.
In order to preserve the Markov property of planning, the listener agent observed its own 2D position and 2D velocity.
Episodes lasted 50 timesteps.

The \textit{9-points} environment used the same physics engine as \textit{triangle} and episodes also lasted 50 timesteps.
Landmarks were located at distance 200 from the origin, with landmark $i$ at angle $\frac{\pi}{3} + 0.3\lfloor \frac{i}{3} \rfloor$ radians.
We found that 0.3 was large enough to encourage unique tokens for each landmark, but small enough that clusters were still apparent.

The \textit{uniform} environment used the same physics engine, with both agents' masses as 0.1, but episodes lasted 100 timesteps.
We found that additional time enabled more convergent behaviors: that is, agents had to learn to reach a location and stop for maximal reward, as opposed to traveling at full speed in a fixed direction.
Landmarks and agents were spawned at locations drawn uniformly at random from the square described by $(-200, -200)$ to $(200, 200)$.

\section{Appendix: Training Details and Hyperparameters}
\label{app:hyper}
This section specifies the training hyperparameters used in all experiments.
When conducting multiple trials for a given environment, each trial was conducted by setting the random seed equal to the zero-indexed trial number.

All agents were trained using an implementation of MADDPG \citep{lowe2017multi}, using an existing MIT-licensed PyTorch implementation of the algorithm as an initial development point \citep{shariqbal}.
When experimenting with environmental noise, all agents were trained using identical noise models and evaluated with zero noise.
We used an Adam optimizer with default parameters except for the learning rate, which we set to 0.01.
Unless otherwise noted, we set $\tau = 0.01$ -- the degree of soft updating of the target networks used in MADDPG.

All agents used a three-layer MLP with ReLU activations and hidden dimension 64 as the base of their policy, with communication or action heads transforming the base output to the desirable form via two layers.
Continuous outputs were bounded by a \texttt{tanh} activation.
Tokens used by prototype agents were bounded to the unit hypercube for fair comparisons with one-hot tokens by passing tokens through a sigmoid activation.
In agent-only experiments, we used temperature 0.1; in human-agent experiments, we used temperature 1.0.

All training times were calculated on a desktop computer with an NVIDIA GeForce RTX-2080 graphics card and 16 Intel-i9 CPUs.
We reported training times for a single team - total training time for a suite of experiments is calculated by multiplying the numbers of methods, random seeds, and environment settings.
In total, all experiments and evaluation took approximately 48 hours.

\subsection{Agent-Only}
\subsubsection{Triangle}
Episodes lasted 50 timesteps; training was conducted over 10,000 episodes.
Models were updated every 2 episodes, using a batch size of 1,024, sampled from the replay buffer.

Training took approximately 5 minutes.

\subsubsection{9 Points}
Episodes lasted 50 timesteps; training was conducted over 5,000 episodes in all scenarios.
Models were updated every 2 episodes, using a batch size of 1,024, sampled from the replay buffer.

In the case of $c = 3$, wherein the one-hot agents could only use 3 tokens to communicate about 9 possible targets, we observed some unstable training behavior: agents failed to consistently reach and then stay at the high mean rewards sometimes observed during training.
Experiments with different learning rates and values of $\tau$ over three orders of magnitude failed to resolve this issue.
Because this undesirable behavior only arose in this specific scenario of under-parametrized communication, we were not concerned by the bad performance.
In order to calculate fair baseline values for one-hot communication, we used the best-performing model from each training run, with models saved every 1000 episodes.

Training took approximately 5 minutes.

\subsubsection{Uniform}
Episodes lasted 100 timesteps; training was conducted over 5,000 episodes.
Models were updated every episode; using a batch size of 1,024 sampled from the replay buffer of length $100,000$.

In experiments using continuous or prototype-based communication, we used $c = 3$ and $\sigma = 0$; for one-hot communication, we used $c = 5$ to allow for 5 tokens.
Initial experiments with other values of $\sigma$ at $0.01$ or $0.05$ showed no large differences in behavior.

Training took approximately 10 minutes.

The interpolation results in Figure~\ref{fig:interpolation} were generated by calculating the mean end location of the seeker agent at 10\% interpolation intervals for each pair of prototypes.
Mean locations were calculated over 20 trials for each interpolation level and pair of tokens.

\subsection{Human-Agent Experiments}
In all human-agent experiments, agents were trained in the 2-timestep reference game.
In training, models were updated every 50 episodes, using batch size 1,024 from the replay buffer of length 10,000.
We used a learning rate of 0.01, and a $\tau = 0.0001$.

Rather than feed agents the direct pixels from images, we pre-trained a CIFAR10 image classifier, based on the SimpleDLA architecture \citep{yu2018deep}.
Trained over 200 epochs with an SGD optimizer with learning rate 0.1, momentum 0.9, and weight decay of $5e-4$, using random cropping and horizontal flipping of images in training, it achieved 95\% accuracy.
We used the penultimate layer of the network as a feature extractor; the many-dimensional features were reduced to 10D via principle component analysis (PCA) conducted on the entire training set.
Although applying the PCA transform potentially discarded important information for classification, in practice we found that it was sufficient to enable high task performance.

\subsubsection{Zero-Shot Agent Understanding}
Training was conducted over 20,000 episodes.
The speaker agent used fixed tokens, as explained in Section~\ref{sec:shaping}.
The speaker agent was pretrained for 5,000 epochs on the 100 elements of the supervised dataset, with early stopping with patience 50.
A subsequent experiment with 1,000 supervised datapoints (Appendix~\ref{app:reference}) showed no significant change in performance.
For the continuous and prototype-based methods, we used $c = 3$; for one-hot, we used $c = 10$.
Communication was corrupted with zero-mean Gaussian noise with $\sigma = 0.05$.

Agents using all methods appeared to achieve nearly perfect reference accuracy in training with these parameters, indicating that they had achieved a policy optimum that other hyperparameters would not be able to outperform.
Training took approximately 90 seconds.

When using embeddings of AMT-generated communication, we randomly selected any caption from the set of captions generated for images of the true class of the image.
This likely underestimated the rich discriminative power of the AMT descriptions.
However, given that the AMT captions were generated for images in the CIFAR10 training set, and we evaluated agents with images from the test set, there did not exist an exact match of images and captions.

\subsubsection{Zero-Shot Human Understanding of Tokens}
Training was conducted over 50,000 episodes.
The speaker agent did not use fixed tokens or grounding data, as we wished to study if emergent communication tokens were human-interpretable.
The environment matched that of the previous reference game, but with $\sigma = 0.2$.

Without supervised data, learning successful communication protocols in such settings is challenging \citep{NEURIPS2019_fe5e7cb6}.
We overcame this difficulty by creating supervised data for the penultimate, softmax layer in prototype speaker agents.
That is, we added a crossentropy loss term to agent training to encourage prototype agents to produce a class-specific one-hot vector for images.
As this one-hot vector was then multiplied by the token matrix, $T$, which was not subject to supervised data losses, the form of communication remained unguided.

The ``communication-space'' diagrams, an example of which is shown in Figure~\ref{fig:amt_2}, displayed 8 2D communication tokens with class labels. 
Class labels were generated by finding the most common token that the speaker emitted for each image class over 1,000 evaluation runs.
The two excluded classes always consisted of one animal and one vehicle; one of those was then used to create the ``zero-shot'' token that participants had to interpret.
Participants were then shown two images, one of each held-out class, and were marked as correct if they selected the image that generated the token.
We recorded measured mean participant accuracy.
Full details of the user study are included in Appendix~\ref{app:amt}.

Training took approximately 4 minutes.

\section{Appendix: Reference Game Results}
\label{app:reference}
In addition to the main results on zero-shot understanding in the reference game results, we conducted additional experiments to assess the importance of different hyperparameters used in training our agents.

First, in Table~\ref{tab:reference_results1000}, we repeated the zero-shot experiments, this time training agents with 1000, instead of 100, examples of input images and corresponding tokens.
These additional supervised data had a minimal effect on task performance, indicating general similarity of behavior over a large range of supervised dataset sizes.

\begin{table}
  \caption{CIFAR10 zero-shot results when training agents with 1000 supervised communication vectors. Results were similar to using 100 supervised examples.}
  \label{tab:reference_results1000}
  \centering
  \begin{tabular}{lllllll}
    \toprule
    Method     &  Label In & Label Out & Self In & Self Out & AMT In & AMT Out \\
    \midrule
    Cont (BERT) & 92\% (0.7) & 55\% (0.9) & 93\% (0.7) & 67\% (1.0) & 72\% (0.3) & 52\% (0.6)\\
    Proto (BERT) & 94\% (0.5) & 57\% (1.0)  & 92\% (0.8) & 67\% (1.0) & 75\% (0.3) & 57\% (0.5)\\
    Disc & 98\% (0.2) & 50\% (0.4) & 96\% (0.4) & 68\% (0.9)\\
    \bottomrule
  \end{tabular}
\end{table}

Second, we conducted ablation studies to assess the importance of the reward matrix and noise in the reference game environment.
Results from these studies are included in Table~\ref{tab:ablation}.
We found that both components were useful in training agents with the best zero-shot understanding of human-generated labels (as shown in the Label Out and AMT Out columns).
The results align well with our theory of what enables zero-shot understanding: noise is necessary to induce agents to learn a space of communication instead of just tokens, and a non-identity reward matrix is needed to induce semantic relationships between tokens.

\begin{table}
  \caption{Ablation study for prototype-based communication in the CIFAR10 reference game, using 100 supervised examples. Using both noise and reward enabled the best zero-shot understanding of communication generated from natural language.}
  \label{tab:ablation}
  \centering
  \begin{tabular}{llllllll}
    \toprule
    Noise &  $R \neq I$ &  Label In & Label Out & Self In & Self Out & AMT In & AMT Out\\
    \midrule
    Yes & Yes & 93\% (0.7) & \textbf{58\%} (1.0) & 93\% (0.7) &66\% (0.9) & 70\% (0.4) &\textbf{56\% (0.6)}\\
    Yes & No & 96\% (0.4) & 53\% (1.0) & 96\% (0.4)  & 68\% (1.0) & 76\% (0.2) & 54\% (0.7)\\
    No & Yes & 91\% (0.7) & 55\% (1.0) & 91\% (0.7) & 65\% (1.0) & 71\% (0.4) & 54\% (0.5) \\
    No & No & 96\% (0.4) & 53\% (0.8) & 96\% (0.4) & 67\% (1.0) & 74\% (0.3) & 53\% (0.5) \\
    \bottomrule
  \end{tabular}
\end{table}

Lastly, we assessed how the agents responded to different natural language inputs at test time.
Using the models presented in the main paper, we evaluated their behavior when using embeddings of ``animal'' or ``vehicle'' instead of embeddings of the class names.

\begin{table}
  \caption{CIFAR10 zero-shot reference game mean reward (standard error) using different NLP inputs for agents trained with 100 supervised examples. Using vehicle/animal labels (``binary'' rows) worsened in-distribution performance but had no or a small positive effect on zero-shot performance.}
  \label{tab:reference_full}
  \centering
  \begin{tabular}{lll}
    \toprule
    Method     &  Label In & Label Out \\
    \midrule
    Cont (BERT) & 91\% (0.7) & 55\% (1.0) \\
    Cont (BERT) - binary &  59\% (0.2)& 56\% (1.0)\\
    \hline
    Proto (BERT) & 93\% (0.7) & 58\% (1.0)\\
    Proto (BERT) - binary & 61\% (0.3) & 60\% (1.0) \\
    \hline
    Proto (Hand) &  75\% (1.1) & 62\% (1.2)\\
    \hline
    One-hot & 96\% (0.2) & 50\% (0.8)\\
    \bottomrule
  \end{tabular}
\end{table}

Results of these experiments are presented in Table~\ref{tab:reference_full}.
Unsurprisingly, in-distribution performance worsened for both \textit{cont} and prototype models.
Not only were embeddings for ``vehicle'' and ``animal'' outside the training distribution, but also such labels were poor choices for distinguishing between two animals, for example.
However, we observed a small positive effect when using the new labels in out-of-distribution evaluation.
This reinforces the conclusion that the \textit{cont} and prototype models learned a communication space that reflected similarities among animals and among vehicles.

\section{Appendix: NLP Embeddings}
\label{app:nlp}
We used a pre-trained BERT-based English sentence embedder developed by huggingface to convert natural language to embeddings \citep{reimers-2019-sentence-bert}.
By default, the embedding dimension is 768; we wished, however, to used a small communication dimension in order to force semantic relationships between the 10 classes.
We therefore extracted the raw embeddings for 68 sentences (combinations of class names and stems (e.g., [``Pick the ", "The "] $\times$ [``cat'', ``dog'', "ship"]) and performed 3D PCA.
Note that PCA was not informed by the classification task, so embeddings for different classes could have been projected into similar spaces.
In many ways, this mimics the dimensionality reduction employed for image feature extraction (explained in Appendix~\ref{app:hyper}).

\begin{figure}
    \centering
    \includegraphics[width=\textwidth]{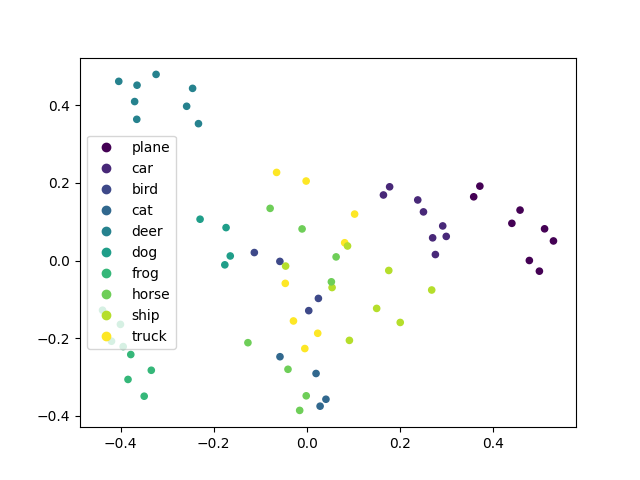}
    \caption{2D PCA of embeddings used to create the low-dimensional language embedder. Even in 2D and without task-specific guidance, language describing the same objects tended to cluster. }
    \label{fig:words_pca}
\end{figure}

In practice, as demonstrated by Figure~\ref{fig:words_pca}, even 2D PCA embeddings of the natural language phrases remained informative of class; suggesting that 3D PCA was at least as informative.
Furthermore, high-level patterns, such embeddings for words for vehicles clustering together (``plane,'' ``car,'', ``ship,'' and ``truck'' on the right of the diagram), also emerged.

Having fit a PCA transform to the data, we saved the transformation function but discarded the 68 sentences that were used in creating the transform.
Instead, when a natural language command was embedded, we started with the natural language, embedded it using our BERT model, and then projected into 3D using the fixed PCA.

\begin{table}[]
    \centering
    \caption{3D vectors used as hand-crafted word embeddings. These embeddings reflect the desired structure of creating clusters by animals and vehicles.}
    \label{tab:hand_embeddings}
    \begin{tabular}{llll}
         Label & x & y & z \\
         \hline
         plane & 0.25 & 0.9 & 0.35 \\
         car & 0.25 & 0.9 & 0.45 \\
         truck & 0.25 & 0.9 & 0.55\\
         ship & 0.25 & 0.9 & 0.65 \\
         bird & 0.75 & 0.05 & 0.2 \\
         frog & 0.75 & 0.25 & 0.2 \\
         cat & 0.75 & 0.05 & 0.3 \\
         dog & 0.75 & 0.25 & 0.3 \\
         deer & 0.75 & 0.05 & 0.4 \\
         horse & 0.75 & 0.25 & 0.4\\
    \end{tabular}
\end{table}

Lastly, in addition to the BERT-based embeddings, we also generated hand-crafted embeddings, specifically designed to encourage agents to learn similarities among animals and among vehicles.
The embedding values are shown in Table~\ref{tab:hand_embeddings}.
These embeddings were certainly not optimal for the reference game, because the proximity of tokens for different classes led to a high rate of confusion due to enviroment noise.
However, by clustering tokens to separate vehicles and animals, this pattern did enable better zero-shot performance than our BERT-based models.

\section{Appendix: User Study Details}
\label{app:amt}
We recruited participants for online studies, conducted on Amazon Mechanical Turk (AMT), to generate descriptions of images or to interpret emergent communication, as explained in Section~\ref{sec:human}.
This experiment was approved by the university Institutional Review Board (IRB).

\begin{figure}
    \centering
    \includegraphics[width=\textwidth]{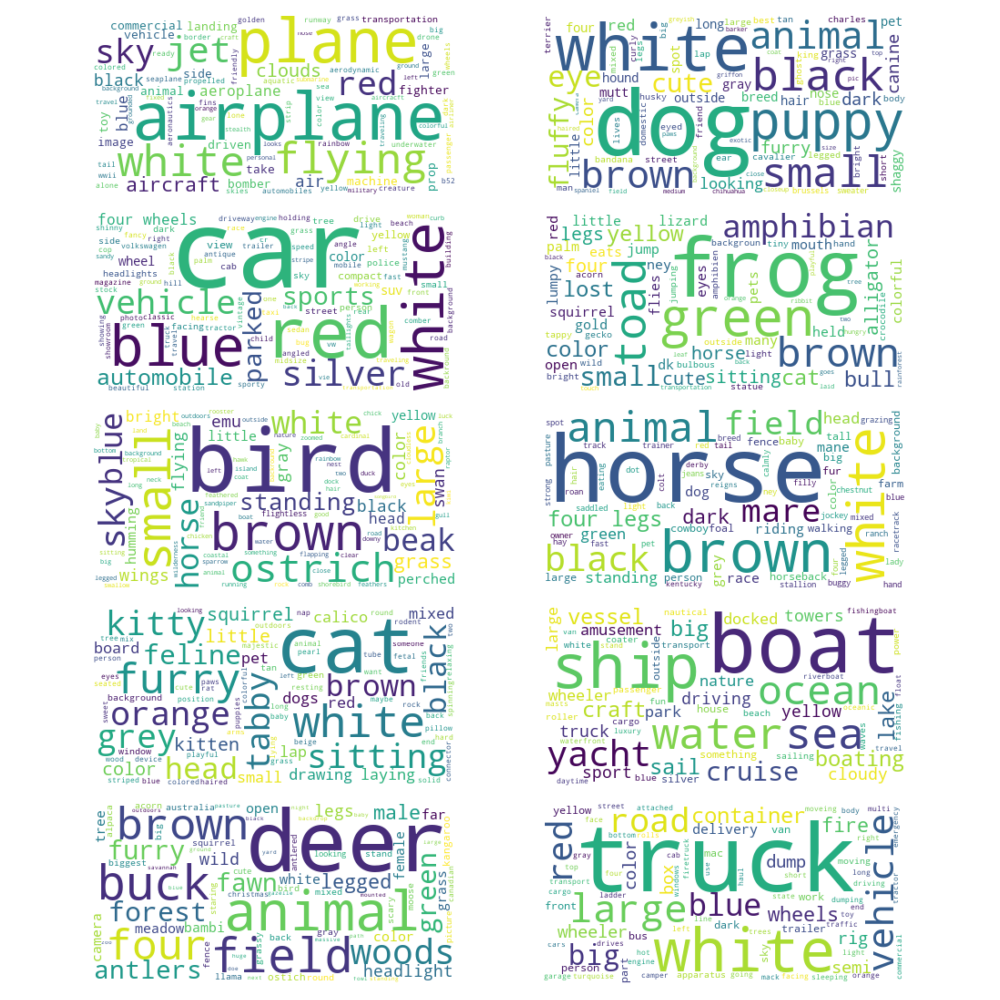}
    \caption{Word cloud visualizations of human-generated annotations for images from the CIFAR-10 dataset.}
    \label{fig:wc}
\end{figure}

In the first data-gathering experiments, we asked participants to generate three labels for images taken from the CIFAR10 dataset. The instruction and user interface of this experiment is shown in Fig.~\ref{fig:amt_1}. We randomly selected 50 images from each of 10 classes and assigned those 500 samples to workers on AMT. In total, 51 unique workers completed the task, resulting in 1257 valid annotations (some images are not distinguishable due to low resolution). Participants were paid $\$0.05$ for each task; the average completion time was 42 seconds, equivalent to a $\$4.28$ hourly wage.

By asking for 3 labels for each image, we hoped to elicit a diverse dataset beyond merely class labels. As shown in the word cloud visualization in Figure~\ref{fig:wc}, annotations generated by MTurkers include both synonyms of class labels (e.g., dog, puppy, canine) and features of specific images (e.g., furry, small, white). In addition, we note that, due to the low resolution of images in the dataset, some participants incorrectly classified images: e.g., labeling a horse as an ostrich.

In the communication interpretation experiment, participants were asked to distinguish the target image from a distractor based on the zero-shot communication sent by agents. In other words, human participants were playing the role of listener in a reference game, teaming up with trained agents. MTurkers were divided into several between-subject groups in which participants received communications from either our prototype-based models or BERT-based embeddings. We tested 5 different trained models using our method and 1 BERT-based embedding on 80 examples of each of 4 animal-vehicle image pairs. After removing tasks that were completed in extremely short or long time (out of 3 standard deviations), we were left with 2093 valid samples from 253 unique workers. Participants were paid $\$0.03$ for each task; the average completion time was $58.9$ seconds, equivalent to a $\$1.84$ hourly wage.

\begin{table}[]
\centering
\begin{tabular}{cccc}
\hline
Group   & Samples & Unique workers & Accuracy \\ \hline
BERT    & 597     & 55             & 65.83\%  \\
Proto-1 & 294     & 36             & 49.32\%  \\
Proto-2 & 301     & 47             & 76.08\%  \\
Proto-3 & 296     & 39             & 70.27\%  \\
Proto-4 & 300     & 38             & 79.00\%  \\
Proto-5 & 305     & 38             & 74.10\%  \\ \hline
\end{tabular}
\caption{Performance table of human communication interpretation experiment.}
\label{tab:AMT}
\end{table}

The instruction and user interface of this experiment is shown in Fig.~\ref{fig:amt_2}. Communications were presented in a 2D plane with 8 labeled nodes and one unlabeled communication node. (Labels were generated by evaluating agents in self-play and selecting the most likely token for each class.) Participants were asked to select one out of two images that the communication node most likely referred to. Both images are from the held-out classes, meaning there were no labeled nodes for the image classes, in order to test zero-shot understanding in human-agent teams. As shown in Table~\ref{tab:AMT}, 4 out of the 5 trained models using our prototype-based method outperformed the BERT-based embeddings. The difference between two methods is marginally significant ($\chi^2(1,N=2093) = 3.21, p = .073$).
Further inspection of Proto-1, the first trained model that exhibits random-chance performance, revealed that the model failed to converge to high reward in training, and that the tokens for vehicles and animals failed to separate.
Training instability is a chronic problem in MARL \citep{NEURIPS2019_fe5e7cb6}, so we consider this failure as a symptom of general difficulties with reinforcement learning rather than our technique specifically.
Nevertheless, in four of our five trials, models did converge to high performance, and if one discards results from Proto-1 as outliers, we significantly outperformed BERT embeddings ($\chi^2(1,N=1799) = 16.15, p < .0001$).

\begin{figure}
    \centering
    \includegraphics[width=\textwidth]{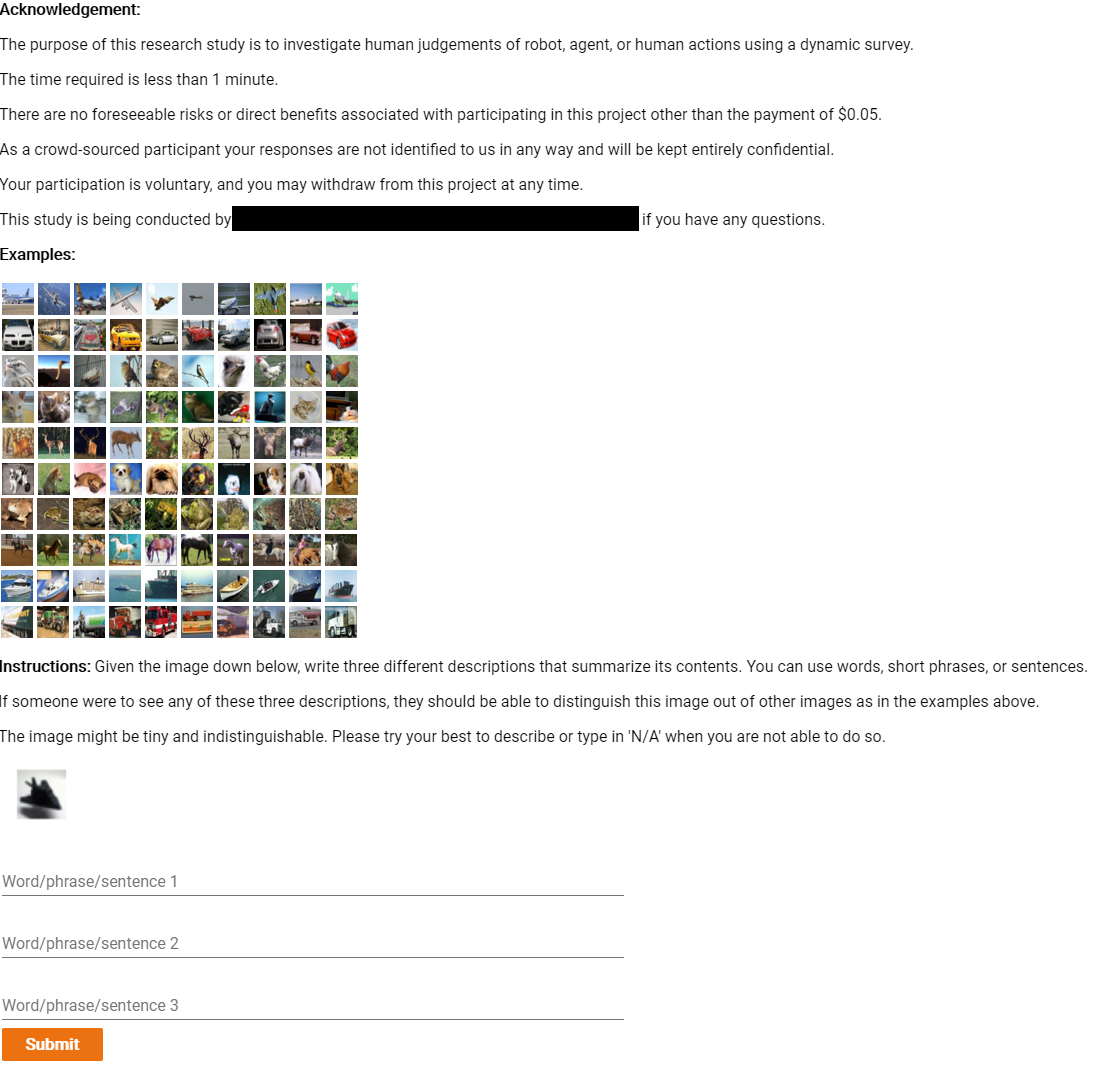}
    \caption{Instructions and user interface of AMT data-gathering experiment.}
    \label{fig:amt_1}
\end{figure}

\begin{figure}
    \centering
    \includegraphics[width=\textwidth]{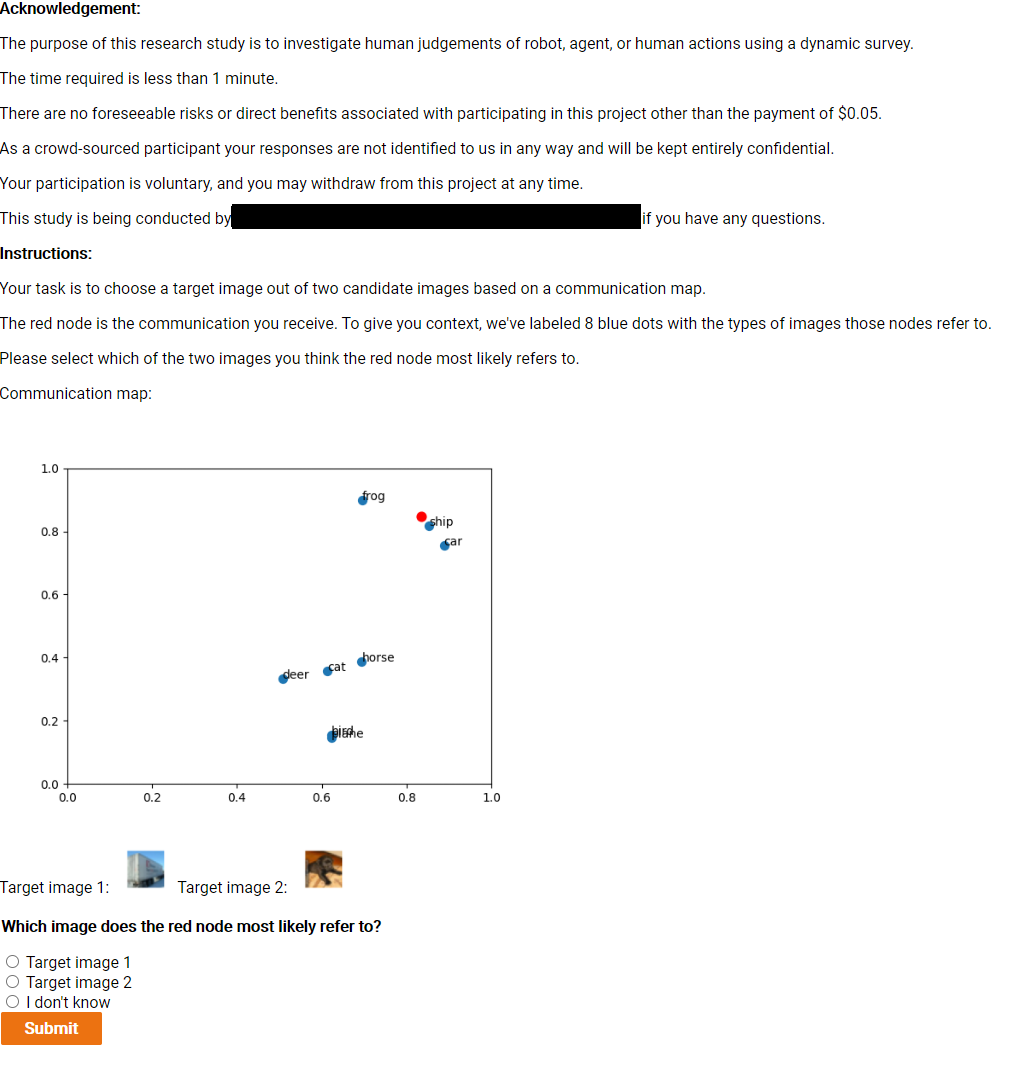}
    \caption{Instructions and user interface of AMT communication interpretation experiment.}
    \label{fig:amt_2}
\end{figure}



\end{document}